\documentclass[conference]{IEEEtran}
\IEEEoverridecommandlockouts
\usepackage{cite}
\usepackage{amsmath,amssymb,amsfonts}
\usepackage{graphicx}
\usepackage{textcomp}
\usepackage{xcolor}
\usepackage[T1]{fontenc}
\usepackage{cite}
\usepackage{amsmath,amssymb,amsfonts}
\usepackage{amsthm}
\usepackage{graphicx}
\usepackage{booktabs}
\usepackage{multirow}
\usepackage{url}
\usepackage{hyperref}
\usepackage{xcolor}
\usepackage{algorithm}
\usepackage{algpseudocode}
\usepackage{balance}

\usepackage{booktabs}
\usepackage{multirow}
\usepackage{tabularx}
\usepackage{booktabs}
\usepackage{multirow}
\usepackage{bm}        
\usepackage{caption}   
\usepackage{algorithm}
\usepackage{algpseudocode}
\newcommand{\eg}{e.g.\ }

\newtheorem{theorem}{Theorem}
\newcommand{\best}[1]{\bm{#1}}

\def\BibTeX{{\rm B\kern-.05em{\sc i\kern-.025em b}\kern-.08em
    T\kern-.1667em\lower.7ex\hbox{E}\kern-.125emX}}
\begin{document}

\title{SpaTeoGL: Spatiotemporal Graph Learning for Interpretable Seizure Onset Zone Analysis from Intracranial EEG\\
\thanks{
This work was conducted during the Master 2 internship of E.~Rostami at Inria Saclay, Palaiseau, France, under the supervision of Prof. T.~Laleg-Kirati.
}
}


\author{
\IEEEauthorblockN{
Elham Rostami\IEEEauthorrefmark{1},
Aref Einizade\IEEEauthorrefmark{2},
and Taous-Meriem Laleg-Kirati\IEEEauthorrefmark{1}
}
\IEEEauthorblockA{\IEEEauthorrefmark{1}
Universit\'e Paris-Saclay, Inria, CIAMS, Gif-sur-Yvette, France\\
elham.rostami@universite-paris-saclay.fr, taous-meriem.laleg-kirati@inria.fr}
\IEEEauthorblockA{\IEEEauthorrefmark{2}
SAMOVAR, Télécom SudParis, Institut Polytechnique de Paris, 91120 Palaiseau, France\\
aref.einizade@telecom-sudparis.eu}
}

\maketitle

\begin{abstract}
Accurate localization of the seizure onset zone (SOZ) from intracranial EEG (iEEG) is essential for epilepsy surgery but is challenged by complex spatiotemporal seizure dynamics. We propose \texttt{SpaTeoGL}, a spatiotemporal graph learning framework for interpretable seizure network analysis. \texttt{SpaTeoGL} jointly learns window-level spatial graphs capturing interactions among iEEG electrodes and a temporal graph linking time windows based on similarity of their spatial structure. The method is formulated within a smooth graph signal processing framework and solved via an alternating block coordinate descent algorithm with convergence guarantees. Experiments on a multicenter iEEG dataset with successful surgical outcomes show that \texttt{SpaTeoGL} is competitive with a baseline based on horizontal visibility graphs and logistic regression, while improving non-SOZ identification and providing interpretable insights into seizure onset and propagation dynamics.
\end{abstract}

\begin{IEEEkeywords}
intracranial EEG (iEEG), epilepsy, seizure onset zone (SOZ), spatiotemporal graph learning, graph signal processing (GSP), interpretability.
\end{IEEEkeywords}

\section{Introduction}
Drug-resistant epilepsy (DRE) remains a major neurological burden and often requires surgical intervention when pharmacological treatment fails \cite{deboer2008burden}. A key step in presurgical evaluation is the localization of the \emph{seizure onset zone} (SOZ) from intracranial EEG (iEEG), which is still largely based on expert visual inspection and complementary biomarkers \cite{marin2024signal}. This process is time-consuming, subjective, and difficult to standardize across centers \cite{lucas2024ai_epilepsy, li2023openneuro_ds003029}. Furthermore, seizures are inherently \emph{network events}, with onset and early propagation characterized by evolving multi-electrode interactions rather than isolated signal abnormalities, motivating approaches that jointly capture spatial and temporal structure.

Several quantitative approaches have been proposed to reduce reliance on manual iEEG review. The \emph{Neural Fragility} framework models iEEG dynamics via perturbation sensitivity has demonstrated strong relevance for SOZ localization and surgical outcome's prediction \cite{li2021neuralfragility}. Another line of work focuses on transforming iEEG time series into alternative signal representations from which discriminative features can be extracted. In this context, Semi-Classical Signal Analysis (SCSA) \cite{lalegkirati2013scsa} combined with Visibility Graphs (VG) provides a physics-informed and graph-theoretic representation of seizure-related dynamics \cite{sadoun2024scsa_vg}. Visibility-based representations are particularly attractive as they encode nonlinear temporal structure directly into graph topology, naturally bridging time-series analysis and graph-based modeling \cite{luque2009hvg}.

Recent studies leverage Graph Neural Networks (GNNs) to analyze iEEG-derived graphs, which are typically constructed from functional connectivity measures. Attention-based GNNs can identify seizure-relevant electrodes through learned importance weights \cite{grattarola2022seizuregnn}, while adaptive graph learning approaches aim to jointly infer graph structure and model sequential auto-regressive temporal dynamics (e.g., GCN--LSTM architectures) \cite{guo2025adaptive}. These methods reflect a broader trend in GNN-based modeling \cite{wu2020gnn_survey}, but often require substantial training data, may face scalability limitations, and can lack transparency regarding the specific connectivity patterns driving their predictions and generalization of temporal modeling.

This paper emphasizes the \emph{interpretability} and \emph{analysis} of seizure dynamics, while addressing SOZ classification problem as well. We propose \texttt{SpaTeoGL}, a spatiotemporal graph learning framework that learns a spatial graph for each time window, capturing functional connectivity across brain regions, and a temporal graph across windows, modeling the evolution of seizures over time. \texttt{SpaTeoGL} is inspired by smooth graph signal models \cite{dong2016laplacian} within the Graph Signal Processing (GSP) framework \cite{ortega2018graph, leus2023graph}. Smoothness is enforced in both domains, promoting similar brain activity across connected regions and consistency between windows with similar seizure dynamics. The resulting optimization problem is solved via block coordinate descent (BCD) with theoretical convergence guarantees \cite{tseng2001bcd}. 

{\color{black} Unlike visibility-graph baselines that mainly provide classification features,
\texttt{SpaTeoGL} learns explicit spatial and temporal graph structures that can be inspected for seizure analysis. Compared with supervised GNN-based approaches, it directly optimizes interpretable Laplacians within a smooth graph signal model, reducing reliance on large training sets while facilitating analysis of seizure onset and propagation.}
Building upon the above discussion, the main contributions of this paper are summarized as follows:
(i) a spatiotemporal graph learning framework that jointly estimates window-level spatial graphs and a
temporal graph across windows; (ii) a GSP-based smoothness formulation
solved by BCD with theoretical convergence guarantees; and (iii) an analysis-oriented
evaluation showing competitive SOZ/non-SOZ discrimination and interpretable temporal and spatial seizure patterns compared with \texttt{HVG+LR}.


\textbf{Notation.} Bold uppercase/lowercase letters denote matrices/vectors,
\(\mathrm{vec}(\cdot)\) denotes column-wise vectorization, and
\(\|\cdot\|_F\) is the Frobenius norm. For an undirected weighted graph
\(\mathcal{G}=(\mathcal{V},\mathcal{E},\mathbf{A})\), the combinatorial Laplacian is \(\mathbf{L}=\mathbf{D}-\mathbf{A}\). The feasible
set of valid \(N\)-node Laplacians is
\begin{equation}
\label{eq:Lt_Ls_set}
\begin{split}
&
\mathcal{L}=\{\mathbf{L}\in\mathbb{R}^{N\times N}\mid\\
&\mathbf{L}\succeq0,\ \mathrm{tr}(\mathbf{L})=N,\ L_{ij}=L_{ji}\le0\ (i\ne j),\
\mathbf{L}\mathbf{1}=\mathbf{0}\}.
\end{split}
\end{equation}
The set of graph signals $\{\mathbf{x}_i\in\mathbb{R}^{N}\}_{i=1}^R$ gathered in a matrix $\mathbf{X}\in\mathbb{R}^{N\times R}$, smoothness on \(\mathcal{G}\)
is measured by \(\mathrm{tr}(\mathbf{X}^\top \mathbf{L} \mathbf{X})=\sum_{i\sim j}{A_{ij}\|\mathbf{x}_{(i)} - \mathbf{x}_{(j)}\|_2^2}\) \cite{ortega2018graph}.


\section{SpaTeoGL: Spatiotemporal Graph Learning}
\label{sec:spateogl}


\begin{figure}[!t]
\centering
\includegraphics[width=0.45\textwidth,trim={4cm 0.2cm 7cm 0.5cm},clip=true]{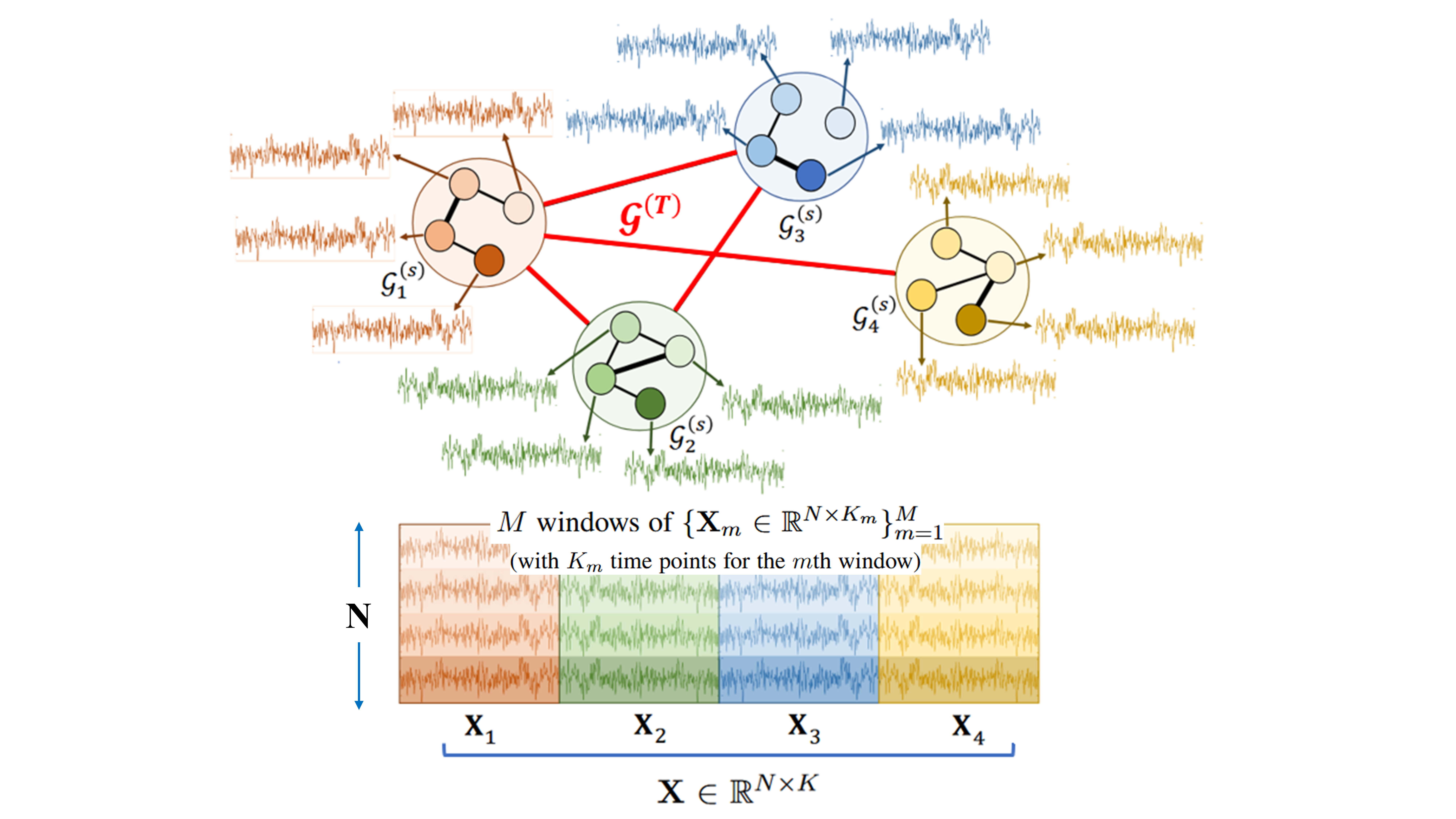}
\caption{Illustration of the \texttt{SpaTeoGL} method in an example setting. Here, the multi-electrode recorded data $\mathbf{X}\in\mathbb{R}^{N\times K}$ with $N=4$ is {\color{black}segmented into $M=4$ windows} of $\{\mathbf{X}_m\in\mathbb{R}^{N\times K_m}\}_{m=1}^M$. Therefore, the {\color{black}\texttt{SpaTeoGL}} learns $M=4$ spatial graphs $\{\mathcal{G}^{(s)}_m\}_{m=1}^M$ representing the connections between electrodes in each window, and one temporal graph $\mathcal{G}^{(T)}$ (with red edges in the figure) showing the connection between $M=4$ windows.} 
\label{fig:SpaTeoGL}
\end{figure}

Fig.~\ref{fig:SpaTeoGL} illustrates the proposed \texttt{SpaTeoGL} setup. 
{\color{black}Let \(\mathbf{X} \in \mathbb{R}^{N \times K}\) denote a multi-electrode iEEG segment, where rows correspond to \(N\) electrodes and columns correspond to \(K\) time samples. The signal is segmented into \(M\) possibly overlapping windows \(\{\mathbf{X}_m\}_{m=1}^{M}\), where \(\mathbf{X}_m \in \mathbb{R}^{N \times K_m}\). When windows overlap, samples contributing to different \(\mathbf{X}_m\) are not disjoint.}
Each column of $\mathbf{X}_m$ is interpreted as a graph signal \cite{ortega2018graph} defined on a \textit{spatial} (electrode) graph associated with window $m$. Thus, for each window, \texttt{SpaTeoGL} learns a spatial graph $\mathcal{G}^{(s)}_m = (\mathcal{V}^{(s)}, \mathcal{E}^{(s)}_m, \mathbf{A}^{(s)}_m)$, where $\mathcal{V}^{(s)}=\{1,\dots,N\}$ indexes the electrodes, and $\mathbf{A}^{(s)}_m\in\mathbb{R}^{N\times N}$ and $\mathbf{L}^{(s)}_m$ denote the corresponding adjacency matrix and Laplacian. Spatial smoothness of the windowed signals is quantified by the quadratic form $\mathrm{tr}(\mathbf{X}_m^\top \mathbf{L}^{(s)}_m \mathbf{X}_m)$ \cite{ortega2018graph}. Across windows, \texttt{SpaTeoGL} also learns a \emph{temporal} graph $\mathcal{G}^{(t)} = (\mathcal{V}^{(t)}, \mathcal{E}^{(t)}, \mathbf{A}^{(t)})$, where $\mathcal{V}^{(t)}=\{1,\dots,M\}$ indexes time windows,
$\mathbf{A}^{(t)}\in\mathbb{R}^{M\times M}$ is the temporal adjacency matrix,
and $\mathbf{L}^{(t)}\in\mathbb{R}^{M\times M}$ is the corresponding Laplacian. Here, the vectorized form of the spatial graph Laplacians $\{\mathbf{L}^{(s)}_m\}_{m=1}^M$ forms a set of smooth graph signals living on the temporal graph $\mathcal{G}_t$. Each window contains graph signals that are assumed smooth on the window’s spatial graph. On the other hand, we concatenate the vectorized form of the spatial Laplacians into a matrix $\tilde{\mathbf{X}}\in \mathbb{R}^{M\times N^2}$ containing the set of smooth graph signals living on the temporal graph $\mathcal{G}_t$ to encode the brain connectivity similarity among connected temporal windows. Hence, the $m$-th row of $\tilde{\mathbf{X}}$ is $\mathrm{vec}(\mathbf{L}^{(s)}_m)^\top$.
Thus, the temporal smoothness term can be interpreted as:
\begin{equation}
\mathrm{tr}\!\Big(\tilde{\mathbf{X}}^\top \mathbf{L}^{(t)} \tilde{\mathbf{X}}\Big)
=
\sum_{i\sim j} w^{(t)}_{ij}\ \big\|\mathrm{vec}(\mathbf{L}^{(s)}_i)-\mathrm{vec}(\mathbf{L}^{(s)}_j)\big\|_2^2,
\label{eq:temporal_interp}
\end{equation}
encouraging windows with similar spatial graphs to be linked in the learned temporal graph. Therefore, we are imposing a notion of joint \textit{spatiotemporal smoothness}. Finally, from smooth graph signal modeling \cite{dong2016laplacian}, the final optimization problem can be cast as a constrained joint spatiotemporal smoothness minimization as follows:
\begin{equation}
\label{eq:joint}
\begin{split}
&\min_{\{\mathbf{L}^{(s)}_m\}_{m=1}^M,\ \mathbf{L}^{(t)}}
\sum_{m=1}^{M}\mathrm{tr}\!\Big(\mathbf{X}_m^\top \mathbf{L}^{(s)}_m \mathbf{X}_m\Big)
+ \beta \sum_{m=1}^{M}\|\mathbf{L}^{(s)}_m\|_F^2\\
&+ \mathrm{tr}\!\Big(\tilde{\mathbf{X}}^\top \mathbf{L}^{(t)} \tilde{\mathbf{X}}\Big)
+ \beta \|\mathbf{L}^{(t)}\|_F^2,\\
&\text{s.t. } \mathbf{L}^{(s)}_m\!\in\!\mathcal{L}^{(s)},\ \mathbf{L}^{(t)}\!\in\!\mathcal{L}^{(t)}.
\end{split}
\end{equation}
Here, building upon the set of valid Laplacians stated in \eqref{eq:Lt_Ls_set}, we constrain the temporal and spatial Laplacians to be in $\mathcal{L}^{(t)}$ and $\mathcal{L}^{(s)}$, respectively. Besides, $\beta>0$ regularizes density/smoothness and ensures strict convexity of each block update (see Theorem~\ref{thm:convergence}). We optimize \eqref{eq:joint} via BCD \cite{tseng2001bcd} with the following details about the subproblems:

\textbf{Spatial update (for each window $m$):} By fixing $\{\mathbf{L}^{(s)}_r\}^M_{r=1,r\ne m}$ and $\mathbf{L}^{(t)}$, the subproblem for optimizing over $\mathbf{L}^{(s)}_m$ takes the following form:
\begin{equation}
\begin{split}
&\min_{\mathbf{L}^{(s)}_m\in\mathcal{L}^{(s)}}
\mathrm{tr}\!\Big(\mathbf{X}_m^\top \mathbf{L}^{(s)}_m \mathbf{X}_m\Big)
+ \beta\|\mathbf{L}^{(s)}_m\|_F^2\\
&+ \sum_{j=1,\ne m}^{M} w^{(t)}_{mj}\ \big\|\mathbf{L}^{(s)}_m-\mathbf{L}^{(s)}_j\big\|_F^2.
\end{split}
\label{eq:spatial_subprob}
\end{equation}
{\color{black}Problem \eqref{eq:spatial_subprob} is strictly convex because the quadratic regularization term \(\beta\|\mathbf{L}_m^{(s)}\|_F^2\), with \(\beta>0\), is strictly convex, while the trace term is linear, the coupling term is convex quadratic, and the feasible set \(\mathcal{L}^{(s)}\) is convex \cite{kalofolias2016learn}. Therefore, the spatial update admits a unique minimizer and can be solved using standard convex optimization solvers, \eg CVXPY \cite{diamond2016cvxpy}.}

\textbf{Temporal update:} To optimize the temporal subproblem over $\mathbf{L}^{(t)}$, the optimization takes the following form:
\begin{equation}
\min_{\mathbf{L}^{(t)}\in\mathcal{L}^{(t)}}
\mathrm{tr}\!\Big(\tilde{\mathbf{X}}^\top \mathbf{L}^{(t)} \tilde{\mathbf{X}}\Big)
+ \beta\|\mathbf{L}^{(t)}\|_F^2.
\label{eq:temporal_subprob}
\end{equation}
where can be cast as the strictly convex problem of learning a graph from the smooth graph signals \cite{dong2016laplacian,kalofolias2016learn}.

The following theorem states the convergence analysis corresponding to the joint multi-convex optimization \eqref{eq:joint}: 

\begin{algorithm}[t]
\caption{SpatioTemporal Graph Leraning (\texttt{SpaTeoGL})}\label{alg:spateogl}
\begin{algorithmic}[1]
\Require iEEG $\mathbf{X}\in\mathbb{R}^{N\times K}$, windows $M$, regularizer $\beta>0$
\State Segment $\mathbf{X}$ into $\{\mathbf{X}_m\}_{m=1}^M$
\State Initialize $\{\mathbf{L}^{(s)}_m\}\in\mathcal{L}^{(s)}$ and $\mathbf{L}^{(t)}\in\mathcal{L}^{(t)}$
\While{not converged}
  \For{$m=1,\dots,M$}
    \State Solve \eqref{eq:spatial_subprob} to update $\mathbf{L}^{(s)}_m$
  \EndFor
  \State Form $\tilde{\mathbf{X}}=[\mathrm{vec}(\mathbf{L}^{(s)}_1)|\dots|\mathrm{vec}(\mathbf{L}^{(s)}_M)]^\top$
  \State Solve \eqref{eq:temporal_subprob} to update $\mathbf{L}^{(t)}$
\EndWhile
\State \Return $\{\mathbf{L}^{(s)}_m\}_{m=1}^M$, $\mathbf{L}^{(t)}$
\end{algorithmic}
\end{algorithm}

\begin{theorem}
\label{thm:convergence}
Assume $\beta>0$ and that each block subproblem \eqref{eq:spatial_subprob} and \eqref{eq:temporal_subprob} is solved exactly over nonempty closed convex sets $\mathcal{L}^{(s)}$ and $\mathcal{L}^{(t)}$.
Then, the objective in \eqref{eq:joint} is non-increasing across \texttt{SpaTeoGL} iterations.
Moreover, every limit point of the iterates is a stationary point of \eqref{eq:joint}.
Each block update admits a unique minimizer due to strict convexity induced by $\beta\|\cdot\|_F^2$.
\end{theorem}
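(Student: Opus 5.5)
The plan is to view \eqref{eq:joint} as a block-structured problem with blocks $\mathbf{L}^{(s)}_1,\dots,\mathbf{L}^{(s)}_M,\mathbf{L}^{(t)}$ and to follow the Gauss--Seidel BCD framework of \cite{tseng2001bcd}. The three claims are handled in the order uniqueness, monotonicity, stationarity. Write $F$ for the objective in \eqref{eq:joint}. Using \eqref{eq:temporal_interp}, expand the temporal term as $\sum_{i<j} w^{(t)}_{ij}\|\mathbf{L}^{(s)}_i-\mathbf{L}^{(s)}_j\|_F^2$, where $w^{(t)}_{ij}=-L^{(t)}_{ij}\ge 0$. Two facts follow.

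\emph{Restriction to the spatial block $m$.} With all other blocks fixed, $F$ coincides up to an additive constant with \eqref{eq:spatial_subprob}. This is a linear term plus $\beta\|\cdot\|_F^2$ plus a nonnegative combination of convex quadratics.

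\emph{Restriction to the temporal block.} With the spatial blocks fixed, $F$ is the linear function $\mathrm{tr}(\tilde{\mathbf{X}}^\top\mathbf{L}^{(t)}\tilde{\mathbf{X}})=\langle \tilde{\mathbf{X}}\tilde{\mathbf{X}}^\top,\mathbf{L}^{(t)}\rangle$ plus $\beta\|\mathbf{L}^{(t)}\|_F^2$, up to a constant. This is exactly \eqref{eq:temporal_subprob}.

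\textbf{Uniqueness.} In both restrictions the objective is continuous and $\beta$-strongly convex, since $\beta>0$. The feasible sets are nonempty, closed and convex; they are also bounded, because $\mathbf{L}\succeq0$ and $\mathrm{tr}(\mathbf{L})=N$ give $\|\mathbf{L}\|_F\le N$. A strongly convex continuous function on such a set has exactly one minimizer, which settles the last sentence of the theorem.

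\textbf{Monotonicity.} Each inner step of Algorithm~\ref{alg:spateogl} replaces one block by the exact minimizer of $F$ restricted to that block. The current value of the block is feasible, so $F$ cannot increase at that step. Chaining the $M+1$ steps of one sweep gives $F^{k+1}\le F^k$. In addition, $F\ge 0$ on the feasible set: the trace terms are nonnegative because the Laplacians are PSD. Hence $F^k$ converges.

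\textbf{Stationarity of limit points.} The feasible set $\mathcal{C}=(\mathcal{L}^{(s)})^M\times\mathcal{L}^{(t)}$ is compact, so limit points exist. I would then invoke Tseng's result (Thm.~4.1 in \cite{tseng2001bcd}) for cyclic BCD. Its hypotheses are the following.
\begin{enumerate}
\item[(i)] $F=f_0+\sum_b \iota_{\mathcal{C}_b}$, where the smooth part $f_0$ is a polynomial, so its domain is open and it is differentiable.
\item[(ii)] Each block minimizer is unique.
\item[(iii)] The level set $\{F\le F^0\}$ is compact.
\end{enumerate}
Under these, every limit point is a coordinatewise minimizer. Since $f_0$ is differentiable and the constraints are separable across blocks, a coordinatewise minimizer satisfies $\langle\nabla_b f_0(\mathbf{z}^*),\mathbf{y}_b-\mathbf{z}^*_b\rangle\ge0$ for every block $b$ and every feasible $\mathbf{y}_b$. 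Summing over blocks gives the stationarity condition $\langle\nabla f_0(\mathbf{z}^*),\mathbf{y}-\mathbf{z}^*\rangle\ge0$ for all $\mathbf{y}\in\mathcal{C}$.

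If one prefers a self-contained argument, it would run as follows. Strong convexity gives the sufficient decrease $F(\text{before})-F(\text{after})\ge\frac{\beta}{2}\|\Delta_b\|_F^2$ at each block step. Summing this bound and using that $F^k$ converges yields $\|\mathbf{z}^{k+1}-\mathbf{z}^k\|\to0$. Then take limits in each block's optimality condition along a convergent subsequence, using continuity of $\nabla f_0$ and closedness of $\mathcal{C}_b$.

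\textbf{Main obstacle.} The hard step is this passage to the limit. The coupling term $\sum w^{(t)}_{ij}\|\mathbf{L}^{(s)}_i-\mathbf{L}^{(s)}_j\|_F^2$ is bilinear-quadratic, so $F$ is not jointly convex. The argument must therefore avoid global convexity and rely only on blockwise strong convexity together with vanishing successive differences. These ensure that the neighbouring blocks used in the block-$b$ update at iteration $k$ converge to the same limit as the subsequence. I would check carefully that $\nabla_b f_0$ evaluated at these partially updated points converges to $\nabla_b f_0(\mathbf{z}^*)$; this follows from Lipschitz continuity of $\nabla f_0$ on the compact set $\mathcal{C}$.
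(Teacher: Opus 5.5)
Your proposal is correct and follows the same route as the paper's proof: strict convexity of each block restriction from $\beta\|\cdot\|_F^2$ gives uniqueness and monotone decrease, nonnegativity of the objective gives convergence of the values, and Tseng's BCD theory gives stationarity of limit points. You go further than the paper by checking Tseng's hypotheses explicitly. In particular, you establish compactness of the feasible set from $\mathbf{L}\succeq 0$ and the trace constraint, and regularity from the smooth-plus-separable-indicator structure; the paper's proof leaves both implicit.
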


\begin{proof}
First, we fix all blocks except one.
For a spatial block $\mathbf{L}^{(s)}_m$, the objective restricted to $\mathbf{L}^{(s)}_m$ is the sum of:
(i) a linear trace term $\mathrm{tr}(\mathbf{X}_m^\top \mathbf{L}^{(s)}_m \mathbf{X}_m)$,
(ii) a strictly convex quadratic $\beta\|\mathbf{L}^{(s)}_m\|_F^2$ (since $\beta>0$),
and (iii) convex quadratic coupling terms $\sum_j w^{(t)}_{mj}\|\mathbf{L}^{(s)}_m-\mathbf{L}^{(s)}_j\|_F^2$.
Over the closed convex set $\mathcal{L}^{(s)}$, this yields a strictly convex problem with a unique minimizer; thus the spatial update cannot increase \eqref{eq:joint}.
The temporal update is analogous: the trace term is linear in $\mathbf{L}^{(t)}$ and $\beta\|\mathbf{L}^{(t)}\|_F^2$ is strictly convex over closed convex $\mathcal{L}^{(t)}$, hence it also cannot increase the objective and has a unique minimizer. Therefore, each full iteration yields a non-increasing objective sequence bounded below by $0$, so the objective values converge.
Standard BCD convergence results then imply that every limit point is stationary; see \cite{tseng2001bcd}.
\end{proof}

\section{Experimental Protocol and Results}
\label{sec:experiments}

Our primary objective is to evaluate \texttt{SpaTeoGL} as an \emph{analysis tool} for iEEG seizure dynamics by addressing three key questions:
\textbf{Q1:} Is \texttt{SpaTeoGL} competitive with a baseline for SOZ/non-SOZ discrimination?
\textbf{Q2:} Does it reveal temporally coherent regimes (e.g., pre-onset versus propagation) through the learned temporal graph?
\textbf{Q3:} Does it highlight clinically reported SOZ electrodes near seizure onset via the learned spatial graphs?

To address Q1--Q3, we use the Epilepsy-iEEG-Multicenter dataset \cite{li2023openneuro_ds003029}.
We focus on nine patients with successful post-operative outcomes, clearly labeled SOZ electrodes, and usable annotated recordings.
Preprocessing includes bad-electrode removal (based on expert inspection), 60\,Hz notch filtering, 0.5--100\,Hz bandpass filtering, downsampling to 250\,Hz, and windowing around seizure onset (512\,ms windows with 50\% overlap).
Following this pipeline, the analysis is conducted on nine patients comprising 31 recordings in total.
{\color{black}For \texttt{SpaTeoGL}, most recordings yield $M = 13$ spatial graphs, corresponding to three pre-onset, one onset, and nine post-onset windows. For Patient 13, the available temporal signal length results in one fewer window, yielding $M = 12$ spatial
graphs and a temporal graph with 12 nodes.}

\subsection{\textbf{Q1:} Quantitative comparison with baseline}
For baseline comparison, we include \texttt{HVG+LR} (inspired by \cite{sadoun2024scsa_vg}) for SOZ versus non-SOZ electrode classification.
HVG converts each one-dimensional time series into a graph using the horizontal visibility criterion \cite{luque2009hvg}.
To address the resulting high dimensionality, we select three windows per recording (pre-onset, onset, and post-onset), compute the corresponding HVG adjacency matrices, average them, and vectorize the upper-triangular entries.
Principal Component Analysis (PCA) is then applied for dimensionality reduction, followed by logistic regression for classification.

\begin{table}[!t]
\caption{\color{black}Comparison of channel-level SOZ localization performance. Pt. denotes
patient. Class~0 reports non-SOZ specificity, defined as the fraction of
non-SOZ channels correctly identified as non-SOZ. Class~1 reports SOZ
recall/sensitivity, defined as the fraction of clinically labeled SOZ
channels correctly identified as SOZ. Total denotes the overall channel-level
accuracy. Values are reported as mean \(\pm\) standard deviation. The
higher mean between SpaTeoGL and HVG+LR is boldfaced within each class.
The last row reports the across-patient mean \(\pm\) standard deviation.
The \(p\)-values were obtained using an independent \(t\)-test after
verifying normality of the compared score distributions. 
SpaTeoGL significantly outperforms HVG+LR for Class~0/non-SOZ
identification \((p=0.0048)\), whereas no statistically significant
difference is observed for Class~1/SOZ identification \((p=0.6404)\).}
\label{tab:summary_patients_single}
\centering
\scriptsize
\setlength{\tabcolsep}{2pt}
\renewcommand{\arraystretch}{1.10}

\resizebox{\columnwidth}{!}{%
\begin{tabular}{c cc cc cc}
\toprule
\multirow{2}{*}{\textbf{Pt.}} &
\multicolumn{2}{c}{\textbf{Class 0 (non-SOZ)}} &
\multicolumn{2}{c}{\textbf{Class 1 (SOZ)}} &
\multicolumn{2}{c}{\textbf{Total}} \\
\cmidrule(lr){2-3}\cmidrule(lr){4-5}\cmidrule(lr){6-7}
& \texttt{SpaTeoGL} & \texttt{HVG+LR}
& \texttt{SpaTeoGL} & \texttt{HVG+LR}
& \texttt{SpaTeoGL} & \texttt{HVG+LR} \\
\midrule

01 & $\best{0.82 \pm 0.06}$ & $0.63 \pm 0.10$ & $\best{0.77 \pm 0.11}$ & $0.65 \pm 0.11$ & $\best{0.81 \pm 0.06}$ & $0.63 \pm 0.07$ \\

02 & $0.75 \pm 0.05$ & $\best{0.80 \pm 0.06}$ & $\best{0.42 \pm 0.15}$ & $0.33 \pm 0.20$ & $0.68 \pm 0.05$ & $\best{0.70 \pm 0.01}$ \\

06 & $\best{0.84 \pm 0.11}$ & $0.49 \pm 0.38$ & $0.55 \pm 0.14$ & $\best{0.63 \pm 0.45}$ & $\best{0.76 \pm 0.12}$ & $0.52 \pm 0.16$ \\

08 & $\best{0.83 \pm 0.08}$ & $0.66 \pm 0.22$ & $\best{0.64 \pm 0.09}$ & $0.52 \pm 0.30$ & $\best{0.77 \pm 0.03}$ & $0.62 \pm 0.07$ \\

10 & $0.56 \pm 0.13$ & $\best{0.71 \pm 0.15}$ & $\best{0.63 \pm 0.12}$ & $0.34 \pm 0.27$ & $0.57 \pm 0.11$ & $\best{0.63 \pm 0.09}$ \\

11 & $\best{0.74 \pm 0.15}$ & $0.58 \pm 0.21$ & $0.27 \pm 0.18$ & $\best{0.59 \pm 0.16}$ & $0.55 \pm 0.13$ & $\best{0.58 \pm 0.11}$ \\

13 & $\best{0.96 \pm 0.04}$ & $0.81 \pm 0.17$ & $\best{0.50 \pm 0.12}$ & $0.30 \pm 0.22$ & $\best{0.89 \pm 0.03}$ & $0.74 \pm 0.12$ \\

15 & $\best{0.80 \pm 0.15}$ & $0.71 \pm 0.05$ & $0.21 \pm 0.05$ & $\best{0.51 \pm 0.14}$ & $0.58 \pm 0.11$ & $\best{0.62 \pm 0.06}$ \\

16 & $\best{0.73 \pm 0.06}$ & $0.53 \pm 0.03$ & $0.43 \pm 0.23$ & $\best{0.66 \pm 0.37}$ & $\best{0.66 \pm 0.10}$ & $0.54 \pm 0.11$ \\

\midrule
\textbf{$m\pm\sigma$} &
$\best{0.78 \pm 0.10}$ &
$0.66 \pm 0.11$ &
$0.49 \pm 0.17$ &
$\best{0.50 \pm 0.14}$ &
$\best{0.70 \pm 0.11}$ &
$0.62 \pm 0.07$ \\
\bottomrule
\end{tabular}%
}
\end{table}

\begin{figure}[!t]
\centering
\includegraphics[width=0.47\textwidth]{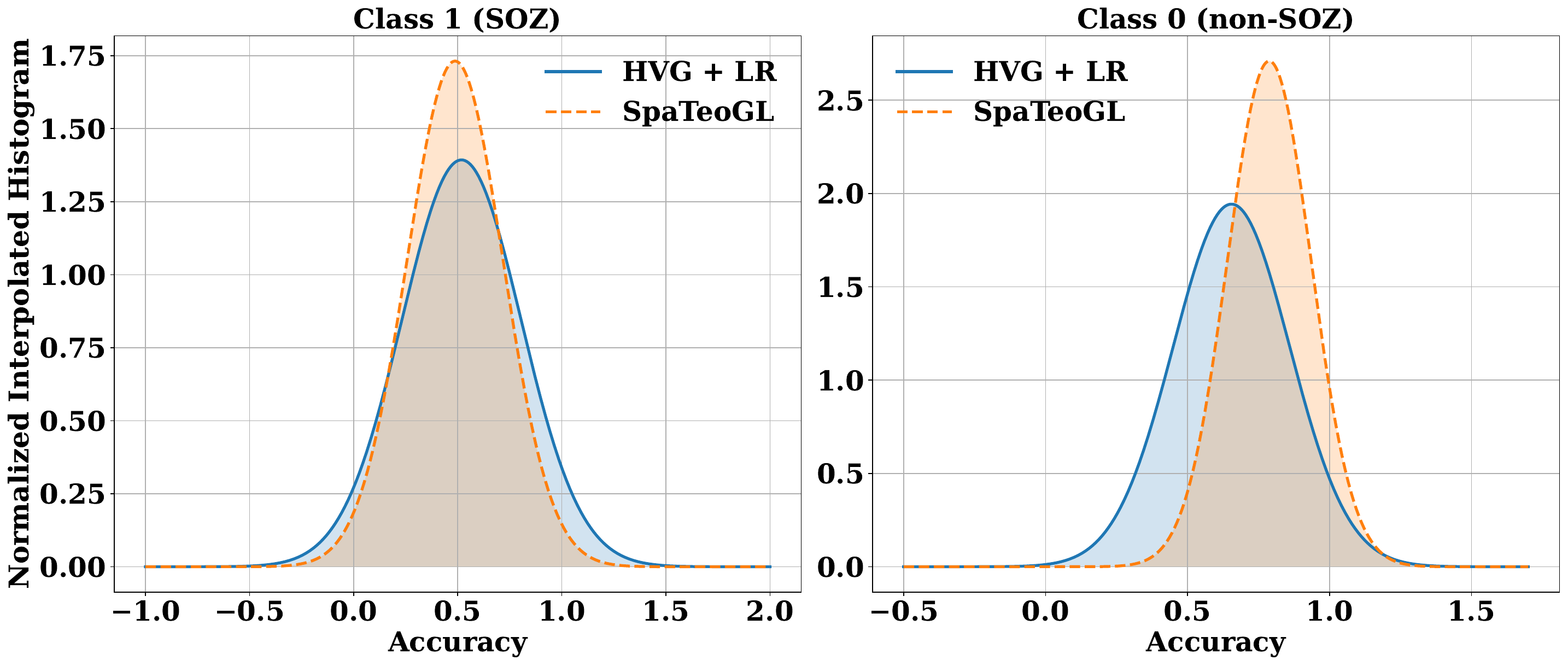}
\caption{Comparison of Normalized Interpolated Histogram for \texttt{HVG+LR} and \texttt{SpaTeoGL} in classifying SOZ (left) and 
non-SOZ (right) electrodes. While no statistically significant difference was found for SOZ electrodes (p = 0.6404), a significant difference was observed for non-SOZ ones (p = 0.0048), with \texttt{SpaTeoGL} showing superior performance.}
\label{fig:Hist}
\end{figure}

\begin{figure}[!t]
    \centering
\includegraphics[width=0.45\textwidth]{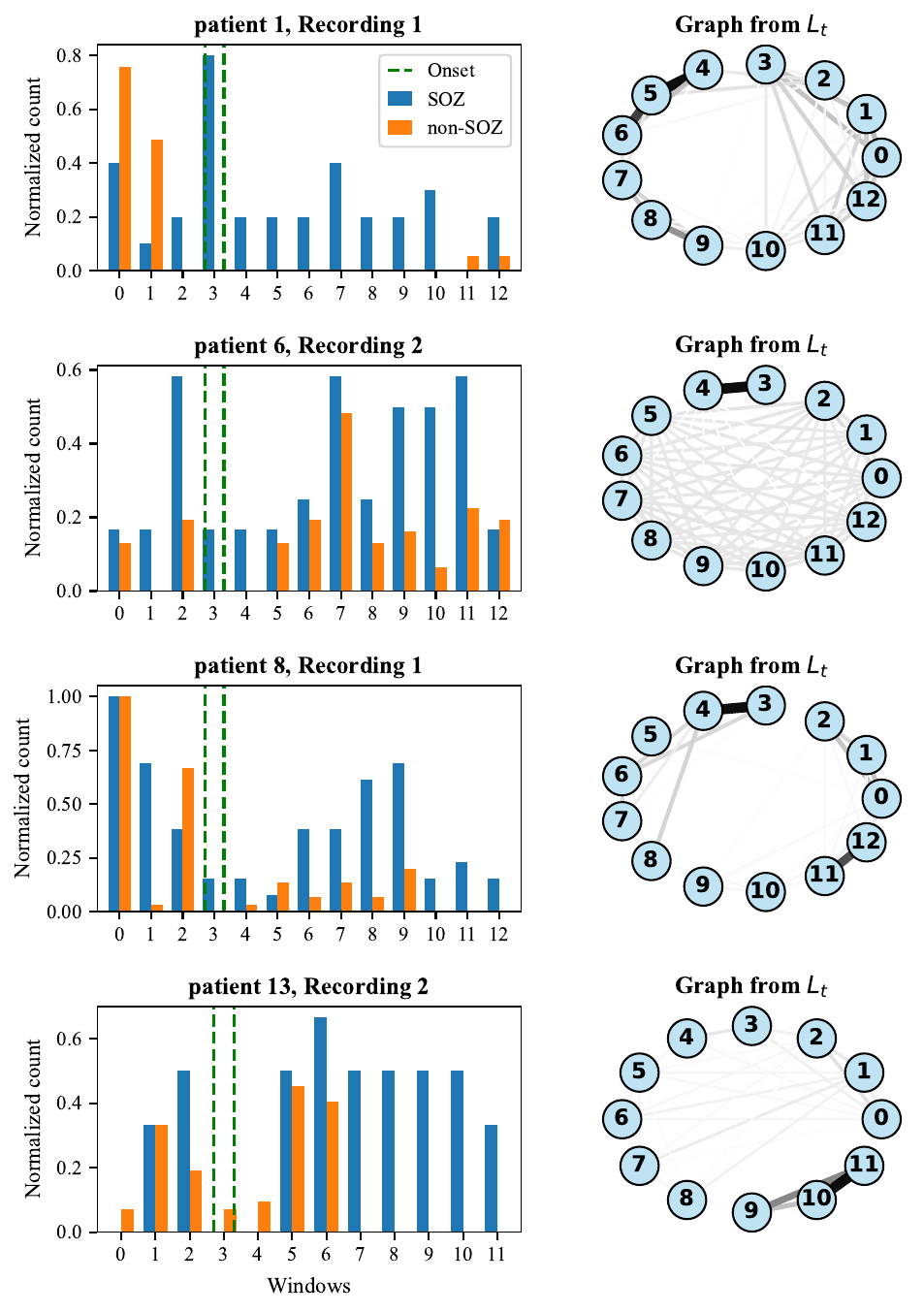}
\caption{\color{black}Temporal evolution of learned spatial graphs and corresponding learned temporal graphs for representative recordings. Left: for each time window, bars show the normalized proportion of selected high-connectivity electrodes belonging to the clinically labeled SOZ and non-SOZ sets. The vertical dotted line indicates the annotated seizure onset. Right: learned temporal graph \(\mathcal{G}^{(T)}\), where each node corresponds to a time window and edge weights reflect similarity between the learned spatial Laplacians of the corresponding windows. Stronger edges indicate more similar spatial connectivity patterns. The first three recordings contain \(M=13\) windows, whereas, for Patient 13, Recording 2 contains \(M=12\) ones.}
\label{fig:temp_graphs}
\end{figure}

\begin{figure}[!t]
    \centering
    \includegraphics[width=0.3\textwidth]{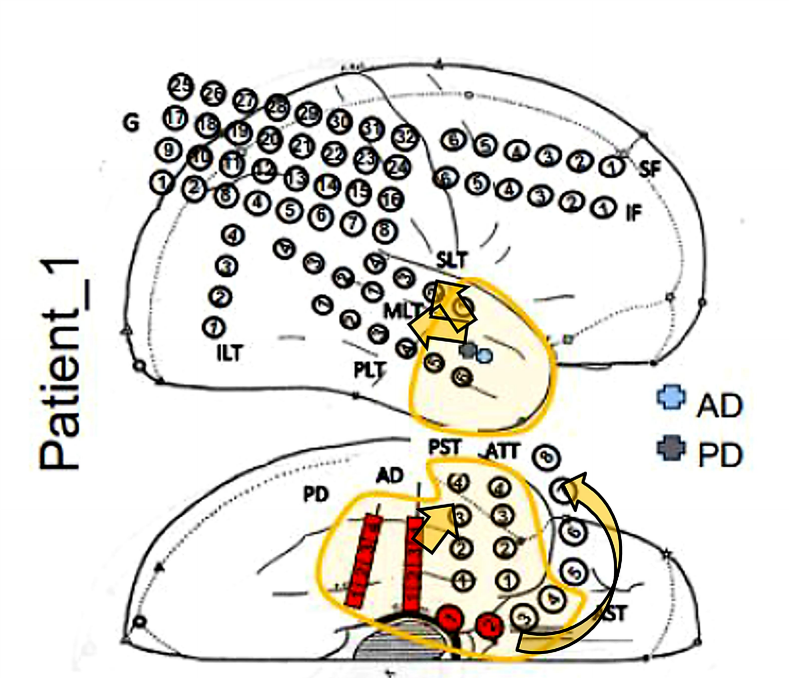}
    \caption*{(a)}
    
    \vspace{0.1cm}    \includegraphics[width=0.48\textwidth,clip=true,trim={0 1.5cm 0 0}]{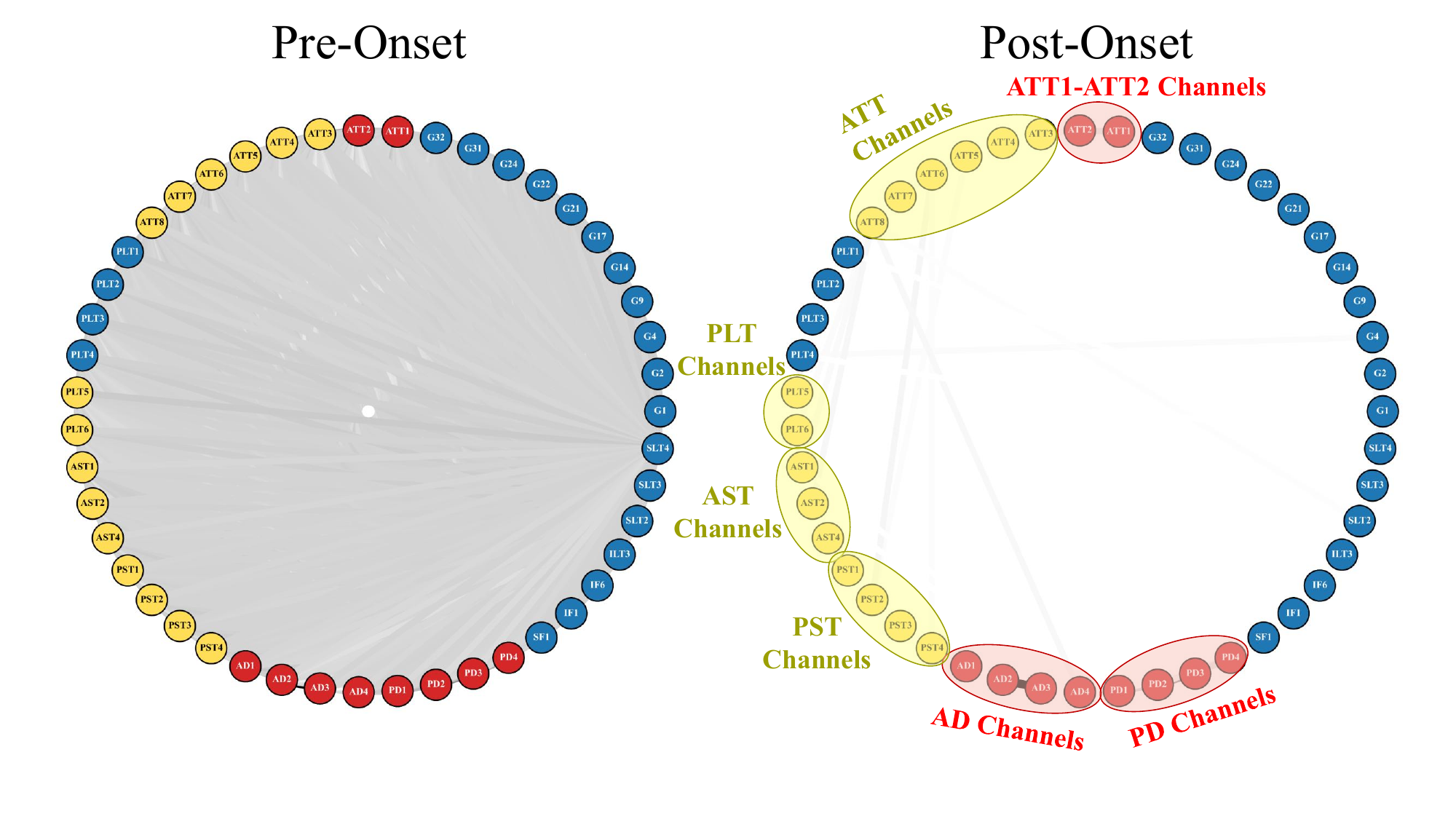}

    \caption*{(b)}
    
    \caption{(a) Electrode placement map on the brain and the potential propagation pathway – patient (pt.) 01 (from \cite{li2023openneuro_ds003029}). (b) Illustration of the difference between mean spatial connections corresponding to Pre-Onset and Post-Onset intervals - pt. 01.}
    \label{fig:spat_graph}
\end{figure}

{\color{black}For quantitative comparison, we report class-wise channel identification scores in Table~\ref{tab:summary_patients_single}. Class 1 corresponds to SOZ channels, and its score is defined as the fraction of clinically labeled SOZ channels that are correctly identified by the method, i.e., SOZ recall/sensitivity. Class 0 corresponds to non-SOZ channels, and its score is defined as the fraction of non-SOZ channels that are correctly identified as non-SOZ, i.e., non-SOZ specificity. The Total column reports the overall channel-level accuracy}. This table indicates that \texttt{SpaTeoGL} is more conservative in labeling electrodes as seizure-related, thereby reducing false positives and improving non-SOZ (class~0) identification.
Across all recordings, \texttt{SpaTeoGL} outperforms \texttt{HVG+LR} on non-SOZ classification, while achieving comparable performance for SOZ (class~1) detection.
The statistical comparison shown in Fig.~\ref{fig:Hist}, based on normalized interpolated histograms, confirms no significant difference for SOZ classification ($p=0.6404$) but a significant advantage for non-SOZ classification ($p=0.0048$).
This results in an overall accuracy of 70\% for \texttt{SpaTeoGL}, compared to 62\% for \texttt{HVG+LR} (Table~\ref{tab:summary_patients_single}).

\subsection{\textbf{Q2:} Temporal graph learning for revealing temporally coherent regimes (onset vs. propagation)}

An important feature of the \texttt{SpaTeoGL} framework is its ability to characterize seizure onset and subsequent propagation through the learned temporal graph.
The right column of Fig.~\ref{fig:temp_graphs} shows the temporal graphs learned from 13 windows (three pre-onset, one onset, and nine post-onset) for representative recordings from patients 1, 6, 8, and 13. {\color{black}The first three recordings contain $M = 13$
windows, whereas, for Patient 13, Recording 2 contains $M = 12$ ones due to its shorter available temporal signal length.}
In these cases, stronger connections between onset and post-onset windows are clearly observed (e.g., among windows 3--5 for patients 1, 6, and 8, {\color{black} windows 9–12 for Patient 8, and windows 8–11 for Patient 13}).
These patterns indicate that \texttt{SpaTeoGL} effectively captures the transition from seizure onset to propagation.

Around seizure onset, seizure-related electrodes are expected to dominate or exhibit stronger connectivity than non-seizure-related electrodes.
To quantify this behavior, the bar plots in the left column of Fig.~\ref{fig:temp_graphs} report, for each window, two ratios:
(i) the proportion of clinically reported seizure-related (SOZ) electrodes present in the spatial graph (blue), and (ii) the proportion of non-seizure-related (non-SOZ) electrodes (orange). Electrodes are identified based on the highest node degree and the largest sum of incident edge weights. Overall, SOZ electrodes tend to dominate at seizure onset and persist across subsequent windows, illustrating the temporal analysis capability of the \texttt{SpaTeoGL} framework.

\subsection{\textbf{Q3:} Spatial graph learning to localize clinically reported SOZ electrodes}
To further illustrate the spatial SOZ localization capability of \texttt{SpaTeoGL}, we focus on Patient~1, for whom detailed seizure propagation information is available in the dataset (Fig.~\ref{fig:spat_graph}(a)). The red-highlighted electrodes (PD, AD, and ATT1--2) correspond to the clinically identified seizure onset regions, followed by propagation to adjacent cortical areas (PST1--4, ATT3--8, AST1--4, PLT5--6, and SLT1). {\color{black} Note that electrode names follow the clinical electrode naming system provided in the OpenNeuro dataset \cite{li2023openneuro_ds003029}.} To assess whether \texttt{SpaTeoGL} captures this pattern, we average the learned spatial graphs over pre-onset and post-onset windows (Fig.~\ref{fig:spat_graph}(b)). 
{\color{black}In the post-onset graph, the strongest average spatial connections are concentrated around contacts clinically labeled as SOZ, shown in red. Additional moderate connections appear around neighboring contacts labeled as propagation-related regions, shown in yellow. In contrast, the pre-onset graph displays weaker and less spatially organized connectivity. This qualitative pattern suggests that SpaTeoGL captures the transition from background activity to a more localized seizure-related network after onset, in agreement with the clinical annotations shown in Fig.~\ref{fig:spat_graph}(a).}

\section{Conclusion}
We introduced \texttt{SpaTeoGL}, a spatiotemporal graph learning framework for interpretable SOZ analysis from iEEG. The method jointly learns window-level spatial graphs and a temporal graph across windows, and is solved via an alternating optimization scheme with convergence guarantees. Experimental results show that \texttt{SpaTeoGL} improves non-SOZ identification compared to a strong baseline (\texttt{HVG+LR}), while providing meaningful insights into seizure onset regimes and propagation dynamics through its learned spatiotemporal graph structure.

\section*{Acknowledgment}
The contributions of Aref Einizade to this work were supported by the French National Research Agency (ANR) 2030 plan under grant agreement ANR-23-CMAS-0033.

\balance
\bibliographystyle{IEEEtran}
\bibliography{refs}

\end{document}